\let\equation*\relax
\let\endEquation*\relax
\newtheorem{dfn}{Definition}
\newtheorem{theorem}{Theorem}
\newtheorem{lem}{Lemma}
\newcommand{\argmin}{\mathop{\rm arg~min}\limits}
\def\ci{\noexpand{\perp \!\!\! \perp}}
\def\notci{\noexpand{\not \! \perp \!\!\! \perp}}
\begin{document}

\title{Practically Effective Adjustment Variable Selection in Causal Inference}

\author{Atsushi Noda$^1$\footnote{Author to whom any correspondence should be addressed.} and Takashi Isozaki$^2$}
\address{$^1$ Sony Corporation of America, Los Angeles, CA, United States of America}
\address{$^2$ Sony Computer Science Laboratories, Inc., Tokyo, Japan}
\eads{\mailto{Atsushi.A.Noda@sony.com}, \mailto{isozaki@csl.sony.co.jp}}

\begin{abstract}
In the estimation of causal effects, one common method for removing the influence of confounders is to adjust the variables that satisfy the back-door criterion. However, it is not always possible to uniquely determine sets of such variables. Moreover, real-world data is almost always limited, which means it may be insufficient for statistical estimation. Therefore, we propose criteria for selecting variables from a list of candidate adjustment variables along with an algorithm to prevent accuracy degradation in causal effect estimation. We initially focus on directed acyclic graphs (DAGs) and then outlines specific steps for applying this method to completed partially directed acyclic graphs (CPDAGs). We also present and prove a theorem on causal effect computation possibility in CPDAGs. Finally, we demonstrate the practical utility of our method using both existing and artificial data.
\end{abstract}

\vspace{2pc}
\noindent{\it Keywords}: Causal inference, Intervention, Structural causal models, Adjustment variables, DAG, Do-calculus

\section{Introduction}
Machine learning techniques such as deep learning have shown great proficiency at prediction or classification that outputs values or labels based on multiple input variables. Applications such as recommending new products that consumers are likely to buy based on their purchase logs and identifying defective products based on quality inspection data are now widely utilized throughout the world. At the same time, when considering the effects of government economic policies, medical drug treatments, business marketing strategies, or the improvement of quality defects in manufacturing, it has become crucial to estimate causal effects and to understand what would happen if a particular action were taken. However, general predictive analysis does not consider causal relationships between variables when learning the mapping from inputs to outputs, making it impossible to perform such causal effect calculations. 
In order to accurately capture such causal relationships and complex interactions, a theoretical perspective on certain networks and complex systems is necessary.
To this end, starting with Wright's path diagram \cite{wright1934method}, various theories and computational algorithms have been developed for structural causal models (SCMs) \cite{pearl1995, spirtes2000causation}, which explicitly express causal relationships among variables by means of directed acyclic graphs (DAGs) known from graph theory.
It is also possible to calculate causal effects by adjusting for variables called covariates in SCMs.

Two well-known criteria for adjustment variable selection are the back-door criterion \cite{pearl2009causality} and the adjustment criterion \cite{shpitser2010}. With these criteria, it is necessary to find the variables that satisfy them. When given a set of variables, there is an algorithm consisting of six steps to modify the graph and see if the intervention and outcome variables are completely separated, and to determine whether or not the set of variables is suitable for use as the adjustment variables \cite{shrier2008reducing}. However, this algorithm cannot find the adjustment variable or determine the optimal one. For finding a better set of adjustment variables, there is a criteria for comparing which of two variable sets is preferable in the case of a causal linear model \cite{kuroki2003covariate, Kuroki2004SelectionOI}, as well as a method to find the optimal variable set with the smallest variance \cite{henckel2022graphical, witte2020efficient}. However, since these methods are based on numerical variables and require the assumption of a multivariate Gaussian distribution, we need to cover other situations to deal with real-world data. Elsewhere, a method was proposed to compress confounding variables into a low-dimensional representation using kernels when the intervention variables are binary \cite{cheng2022sufficient}. As is clear from existing studies, there can be multiple sets of variables that satisfy the back-door or adjustment criteria. If there is a sufficient amount of data, the intervention calculation will be the same regardless of which variables that satisfy these criteria are used for adjustment, but if there is not a sufficient amount of data, the accuracy may degrade depending on which variables are actually used.

In this paper, we propose a method to find the optimal adjustment variable by examining not only the graph structure but also the actual data in order to reduce the accuracy degradation of the intervention calculation caused by the lack of data. The recent studies mentioned above \cite{henckel2022graphical, witte2020efficient} have focused on numerical variables, with little discussion of categorical variables. In contrast, we examine multi-valued categorical variables and do not limit the relationship between variables to a linear one. Our algorithm provides a specific procedure for selecting adjustment variables in a simple problem setting where the graph structure is a DAG and the intervention variable is a single variable. We then extend the problem setting and discuss the applicability of the proposed method to the case of completed partially directed acyclic graphs (CPDAGs) or multiple intervention variables.
In particular, for CPDAGs, we present a theorem about conditions and methods for intervention calculations to be possible and then prove the theorem.

\section{Preliminaries}
In this section, we present a model for graphically representing causal relationships. First, we describe nodes and edges as elements of graphs and their associated notations. Then, we explain how to compute causal interventions based on graph structures and the back-door criterion that is important for the computation.

\subsection{Basics of graphical representation}
A graph structure is represented by nodes and edges. Let $V$ denote the set of nodes ($i=1,2,\cdots,n$), and $E$ denote the set of edges $(i, j)$ connecting the nodes, and let the graph $G$ consist of the pair $G=(V,E)$ of nodes $V$ and edges $E$. Two nodes are adjacent if there is an edge between them, and $i$ is the parent of $j$ and $j$ is the child of $i$ if there is a directed edge from $i$ to $j$. If node $i$ is a parent of $j$ and $j$ is a parent of $k$, we say that there is a directed path from $i$ to $k$. If node $i$ can be reached by a directed path from $i$ to $k$, we say that $i$ is an ancestor of $k$ and $k$ is a descendant of $i$. The set of parents of node $i$ is denoted by $pa(i)$, the set of children by $chi(i)$, the set of ancestors by $anc(i)$, the set of descendants by $des(i)$, and the set of all nodes adjacent to node $i$ by $adj(i)$. Assume that the values of all nodes $\mathbf{D} = \{ D_{1}, D_{2}, \cdots , D_{n} \}$ are observed and all values are categorical.

We assume that the DAGs represent the causal relationships among nodes and satisfy the Causal Markov Condition, described by 
\begin{eqnarray}
 X \ci V \setminus \{X, des(X)\} \mid pa(X)
 \label{cmc}
\end{eqnarray}
\cite{spirtes2000causation}, and that all confounding variables necessary for the intervention calculations are observed. The DAG may be given by human knowledge, but in the case where it is estimated from observed data by statistical causal inference \cite{spirtes2000causation}, it is not always uniquely obtained. In other words, we may have Markov-equivalent DAGs, which are expressed in terms of a completed partially directed acyclic graph (CPDAG) \cite{meekUAI95, andersson1997characterization, chickering2002learning}. For such realistic problems, we extend our algorithm for CPDAGs after first describing it based on DAGs. We do not consider maximal ancestral graphs (MAGs) or partial ancestral graphs (PAGs) in this paper.

\subsection{Intervention and adjustment variables}
To determine the causal effect of the intervention variable $X$ on the outcome variable $Y$, we can utilize causal information between variables represented by DAG. The intervention effect from $X$ to $Y$ is expressed by 
\begin{eqnarray}
 P(Y=y|do(X=x))= \sum_{z}P(Y=y|X=x, Z=z)P(Z=z),
\label{interv}
\end{eqnarray}
where $do(X=x)$ is a notation that represents an intervention with $X$ set to $x$ \cite{pearl2009causality}. $Z$ denotes the adjustment variables, and if $Z$ were numerical variables, the integral form would be used instead of the summation form. The mean causal effect can be calculated as the difference between these intervention effects. For example, if $X$ is the presence or absence of a medicine (0 = not administered, 1 = administered), we can determine the effectiveness of the medication from the difference between $do(X = 0)$ and $do(X = 1)$. In the context of causal effect calculations, $X$ and $Y$ are often assumed to be binary, but in this paper, we assume that they are multi-valued. Also, while we assume that both $X$ and $Y$ are single variables, we will later discuss the case of multiple intervention variables (joint intervention).

The back-door criterion \cite{pearl2009causality} can be used for the selection of adjustment variable $Z$. It is defined as follows.
\begin{dfn}[Back-door criterion, Pearl, 2009]
In a DAG $G$, a set of variables Z satisfies the back-door criterion for $\{X,Y\}$ if
\begin{enumerate}
 \item There is no directed path from $X$ to any element of $Z$. 
 \item In a graph of $G$ excluding edges out of $X$, $X$ and $Y$ are $d$-separated by $Z$.
\end{enumerate}
\end{dfn}
The important point here is that there is always a set of variables in each $G$ that satisfies the back-door criterion. For example, $pa(X)$, $anc(X)$, and any of the variables of the non-ancestors of $X$ added to these sets also satisfy the backdoor criterion. Therefore, multiple sets of variables generally satisfy the back-door criterion. If $X$ and $Y$ are $d$-separated from the beginning (e.g., when there is no parent of $X$), then the empty set $\phi$ satisfies the back-door criterion.

Techniques other than the back-door criterion for intervention calculation include the front-door criterion \cite{pearl1995} and the instrumental variable \cite{angrist1996identification} methods. The advantages and disadvantages of each method vary depending on conditions such as the observation status of variables and the shape of the DAG, but we focus only on the back-door criterion because of its generality. There is also an extension of the back-door criterion called the adjustment criterion \cite{shpitser2010}, but since the smallest set of variables that satisfies the back-door criterion also satisfies the adjustment criterion, we focus only on the back-door criterion for simplicity.

\section{Methodology}
As stated earlier, if variables that satisfy the back-door criterion are found, the intervention effect can be calculated by adjusting for these variables to eliminate the bias caused by confounding factors. However, when calculating intervention effects, using an unnecessarily large number of variables for adjustment simply because they satisfy the back-door criterion may lead to an accuracy degradation, depending on the amount of data. Therefore, in this paper, we propose a new criterion for the selection of adjustment variables and show the specific procedure.

\subsection{Problems in handling real data}
One problem when calculating intervention effects in real-world analyses is that there are multiple sets of variables that satisfy the back-door criterion, and it is difficult to know which one should be used for adjustment. In any case, when a causal relationship between variables can be constructed based on expert knowledge, or when the DAG is estimated from the observational data, it is necessary to find a set of variables that satisfy the back-door criterion. As mentioned earlier, $pa(X)$ certainly satisfies the back-door criterion, and the union of $pa(X)$ and \textit{any variable that is not a descendant of $X$} also satisfies the criterion. Furthermore, using \textit{all variables that are not descendants of $X$} also satisfies the back-door criterion, but using an unnecessarily large number of variables for adjustment increases the probability of accuracy degradation of the intervention calculations. Since covariate adjustment in causal inference is based on probability theory, it is important to keep in mind that statistical processing is performed when this is implemented with actual data. For example, if the number of adjustment variables is large or the number of categories for each variable is large, there may be a shortage of data, which can lead to a reduction in accuracy.

Here are two simple examples of degradation of the calculation accuracy when using $pa(X)$ as the adjusting variable. Figure \ref{fig:1} (Left) shows that the correlation between the intervention variable $X$ and its parent $P1$ is strong, which can happen when variables semantically similar to $X$ are included in the data. Consider the following example. If both $X$ and $P1$ are binary variables and $P(X=1|P1=0) \ll P(X=0|P1=0)$, then there will be fewer data combinations where $X=1$ and $P1=0$, making it difficult to obtain statistically accurate results when calculating $do(X = 1)$. In addition, Fig. \ref{fig:1} (Right) shows an example where the intervention variable $X$ has three parents $\{P1, P2, P3\}$, but only $V1$ is sufficient to close the back-door path. These two examples demonstrate that it may not be appropriate to choose $pa(X)$ as the adjustment variable if the only reason is that it always satisfies the back-door criterion.

\begin{figure}[t]
\centering
\includegraphics[scale=0.5]{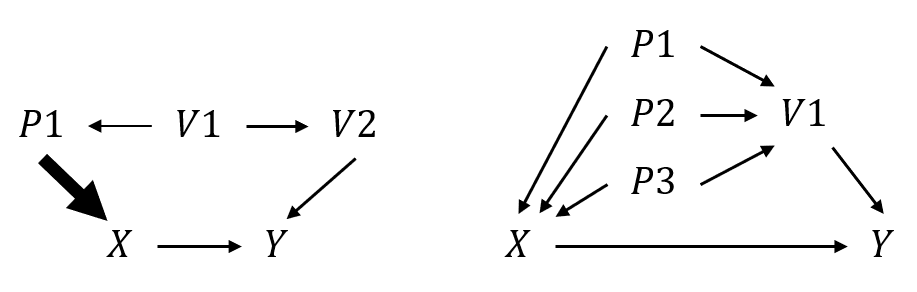}
\caption{(Left) There is a strong correlation between $X$ and $P1$, indicated by the bold arrow. $P1$ definitely closes the back-door path, but so do $V1$ and $V2$. (Right) $X$ has three parents $\{P1, P2, P3\}$, but only $V1$ is sufficient to close the back-door path.}
\label{fig:1}
\end{figure}

Next, we explain the situation where accurate results cannot be obtained statistically, along with specific numerical values. In the DAG shown in Fig. \ref{fig:2}, the causal effect from $X$ to $Y$ can be calculated with $Z$ as the adjustment variable. In the conditional probability table for $Y$, the numbers in parentheses indicate the number of data samples, and the numbers above them indicate the probability value of $Y$ obtained by aggregating the data. However, as we can see from the table, there is almost no data where $X=0$ and $Z=0$. Therefore, in the calculation of $do(X=0)$ expressed in 
\begin{eqnarray}
 P(Y=y|do(X=0)) &=& P(Y=y|X=0,Z=0)P(Z=0) \nonumber \\
 & & + P(Y=y|X=0, Z=1)P(Z=1),
 \label{interv_ex}
\end{eqnarray}
the value of the first term $P(Y=y|X=0,Z=0)$ is calculated at the rate of $P(Z=0)=101/301 \simeq 33\%$ because the value of $X$ is fixed during the intervention calculation regardless of the value of $Z$. 

If the true probability value of $P(Y=1|X=0,Z=0)$ is close to $0.6 \sim 0.8$, the same as for the other conditions, it is clearly a negative effect to have $P(Y=1|X=0,Z=0)=0$ due to only one data satisfying this condition, as shown in Fig. \ref{fig:2}. Thus, even if variables that satisfy the back-door criterion are used in the calculation, the accuracy of the intervention calculation is unstable due to the lack of data. This is why it is important to select the best adjustment variables by considering the data, not only the structural information of the DAG.

\begin{figure}[t]
\centering
\includegraphics[scale=0.6]{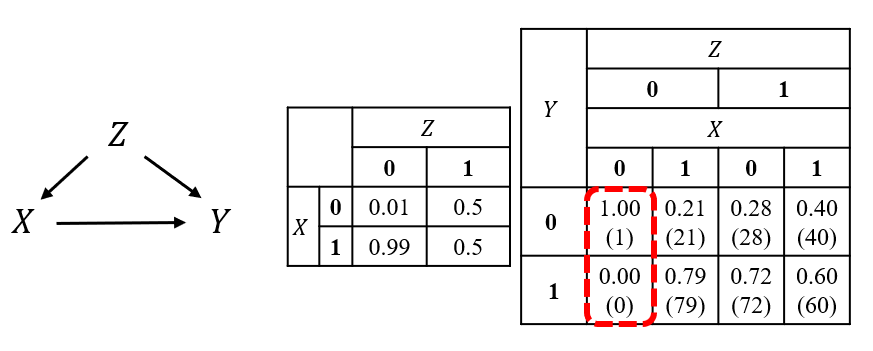}
\caption{Specific DAG and CPT examples that reduce accuracy. In the conditional probability table for $Y$, the numbers in parentheses indicate the number of data samples, and the numbers above them indicate the probability value of $Y$ obtained by aggregating the data. In this example, the computational accuracy of $do(X=0)$ calculus will be unstable due to the small number of data samples available for the cases $X=0$ and $Z=0$. It indicates that it is important to examine the data as well as the DAG structure.}
\label{fig:2}
\end{figure}

\subsection{Proposed selection method}
Here, we propose an adjusted variable selection method that mitigate the above problem. The algorithm consists of two main steps. In Step 1, the minimal set of variables that satisfy the back-door criterion for $\{X,Y\}$ is enumerated, and then in Step 2, the optimal set of variables is selected from these sets.

In Step 1, if there are two sets $A$ and $B$ that satisfy the back-door criterion for $\{X,Y\}$, and if $B \subseteq A$, then it is better to choose B as the adjustment variable in order to avoid the above problem of insufficient data. In this case, the minimal variable set that satisfies the back-door criterion is also the minimal variable set that satisfies the adjustment criterion \cite{shpitser2010}. It is also the minimal adjustment variable for estimating the causal effect from $X$ to $Y$, as shown in Theorem 4.4 \textit{Minimal Covariate Adjustment} in \cite{textor2012}. In Step 1-1, edges adjacent to nodes other than $X$, $Y$, $anc(X)$, or $anc(Y)$ on $G$ are deleted. This is the process for selecting the adjustment variable from the ancestors of $X$ or $Y$. In Step 1-2, edges outgoing from $X$ on $G$ are deleted. This is because the adjustment variable must not be a descendant of $X$, as is clear from the first of the back-door criteria. For Steps 1-1 and 1-2, in Definition 4.3 (Moral Graph Criterion) of \cite{textor2012}, we confirmed that the adjustment variable $Z\subseteq anc(X\cup Y)\setminus des(X)$ satisfies the moral graph criterion, and the minimal set of variables satisfying the moral graph is the same as the minimal set satisfying the backdoor criterion. In Step 1-3, all paths from $X$ to $Y$ on $G$ are enumerated and denoted as $Path_{XY}$. The paths can include head-to-head ones. In Step 1-4, search for variable sets that $d$-separate all paths in $Path_{XY}$. If there is an inclusion relation between the variable sets, the superset is eliminated from the candidates (i.e., only the minimal set is retained). This procedure results in minimal variable sets $\mathbf{S}= \{ \mathbf{S}_{1}, \mathbf{S}_{2}, \cdots , \mathbf{S}_{n} \}$ that satisfy the back-door criterion for $\{X,Y\}$.

In Step 2, the optimal adjustment variable is selected from the $\mathbf{S}$ obtained in Step 1. A simple method would be to randomly sample the data and repeatedly perform intervention calculations using all of $\mathbf{S}$ for each adjustment to see which one has the smallest error, but this is computationally expensive and impractical. As an alternative, we propose a method to select the variable with the smallest correlation with $X$. Here, we assume that all variables are categorical, so we use the mutual information to calculate the correlation, 
 \begin{eqnarray}
 Cor(X, \mathbf{S}_i) &=& I(X, \mathbf{S}_i) \nonumber \\
 &=& \sum_{x \in X} \sum_{s \in Comb(\mathbf{S}_i)} P(x, s) 
 \log\left(\frac{P(x, s)}{P(x)P(s)}\right)
 \label{MI}
\end{eqnarray}

where $Comb(\mathbf{S}_i)$ is a function that enumerates all possible combinations of values for a set of variables $\mathbf{S}_i$. For example, if $\mathbf{S}_i=\{A, B\}$ and both are binary variables, then $Comb(\mathbf{S}_i)=\{(a_1,b_1),(a_1,b_2),(a_2,b_1),(a_2,b_2)\}$. The final adjustment variable is then determined to be $\mathbf{S}_i$, which has the minimum amount of mutual information with $X$. The steps above are specified in Algorithm \ref{alg1}. 
We call this algorithm CAVS (Correlation based Adjustment Variable Selection).
Experimental results will be presented later to show that this criterion actually works.

\begin{algorithm}[t]
\caption{CAVS algorithm}  
\label{alg1}    
\begin{algorithmic}
\STATE {\textbf{input}: $G,X,Y, \mathbf{D}$ $G$: DAG, $X$: intervention variable, $Y$: outcome variable, $\mathbf{D}$: data}
\STATE {\textbf{initialize}: $Z=\phi, \mathbf{S}=\phi$}
\STATE {\textbf{Step 1}: Enumerate the minimal set of variables $\mathbf{S}= \{ \mathbf{S}_{1}, \mathbf{S}_{2}, \cdots , \mathbf{S}_{n} \}$ that satisfy the back-door criterion for $\{X,Y\}$.}
\STATE {Step 1-1}: Delete $(v, i)_{i \in adj(v)}$ $\{v \in V$ $\setminus$ \{ $X$, $Y$, $anc(X)$, $anc(Y)$ \} \} from $G$.
\STATE {Step 1-2}: Delete $(X, i)_{i \in chi(X)}$ from $G$.
\STATE {Step 1-3}: $Path_{XY} \leftarrow$ Enumerate paths from $X$ to $Y$ on $G$.
\STATE {Step 1-4}: Search for variable sets that $d$-separate all paths in $Path_{XY}$.
\begin{description}
    \STATE $V_{pathxy} \leftarrow$ Nodes included in $Path_{XY}$
    \FOR{$i = 1$ \textbf{to} $|V_{pathxy}|$}
        \FOR{Variable combinations $\mathbf{v} (|\mathbf{v}|=i)$ in $V_{pathxy}$}
            \IF{$\mathbf{v}$ $d$-separates all paths in $Path_{XY}$ and $\mathbf{v}$ is not a superset of any variable set in $\mathbf{S}$}
                \STATE $\mathbf{S} = \mathbf{S} \cup \mathbf{v}$
            \ENDIF
        \ENDFOR
    \ENDFOR
\end{description}
\STATE {\textbf{Step 2}: Select the set of variables $\mathbf{S}_{i}$ in $\mathbf{S}$ that has the smallest correlation with $X$.}
\STATE $Z=\argmin_{\mathbf{S}_i \in \mathbf{S}} Cor(X,\mathbf{S}_i)$
\STATE {\textbf{return}} $Z$
\end{algorithmic}
\end{algorithm}

Here, we show a specific steps of CAVS using a simple DAG shown in Fig. \ref{fig:alg-step-example}.
In Step 1-1, only $V4$ is not an ancestor of $X$ or $Y$, so delete edges $(V3, V4)$ and $(V5, V4)$.
In Step1-2, delete $(X, V2)$, the only edge that out of $X$.
In Step1-3, $X \leftarrow V3 \leftarrow V6 \rightarrow V8 \rightarrow Y$ and $X \leftarrow V7 \leftarrow V8 \rightarrow Y$ are enumerated as $Path_{XY}$.
In Step1-4, examine the minimal set of variables that $d$-separates $X$ and $Y$ in the combination of nodes $V_{pathxy}=\{V3, V6, V7, V8\}$, and we obtain $\mathbf{S} = \{V3, V7\}\{V6, V7\}\{V8\}$.
Then, in Step 2, we will find the set of variables with the smallest correlation with $X$, but this depends on the data and is omitted from this example.

So far, we have assumed that there is a single intervention variable. However, there are situations where we want to control multiple variables and observe changes in the outcome variable. The admissibility criterion \cite{DBLP:conf/uai/PearlR95} was defined as a condition under which the effects of a joint intervention become identifiable. Therefore, to extend CAVS to joint intervention, it is sufficient to search for a set of variables that satisfy the admissibility criterion rather than the back-door criterion for Step 1. For Step 2, the intervention variable $X$ and the adjustment variable candidate $\mathbf{S}_i$ may be multiple variables, but a detailed study is needed to determine which criteria are appropriate in such cases (a point we leave to future work).

\begin{figure}[t]
\centering
\includegraphics[scale=0.5]{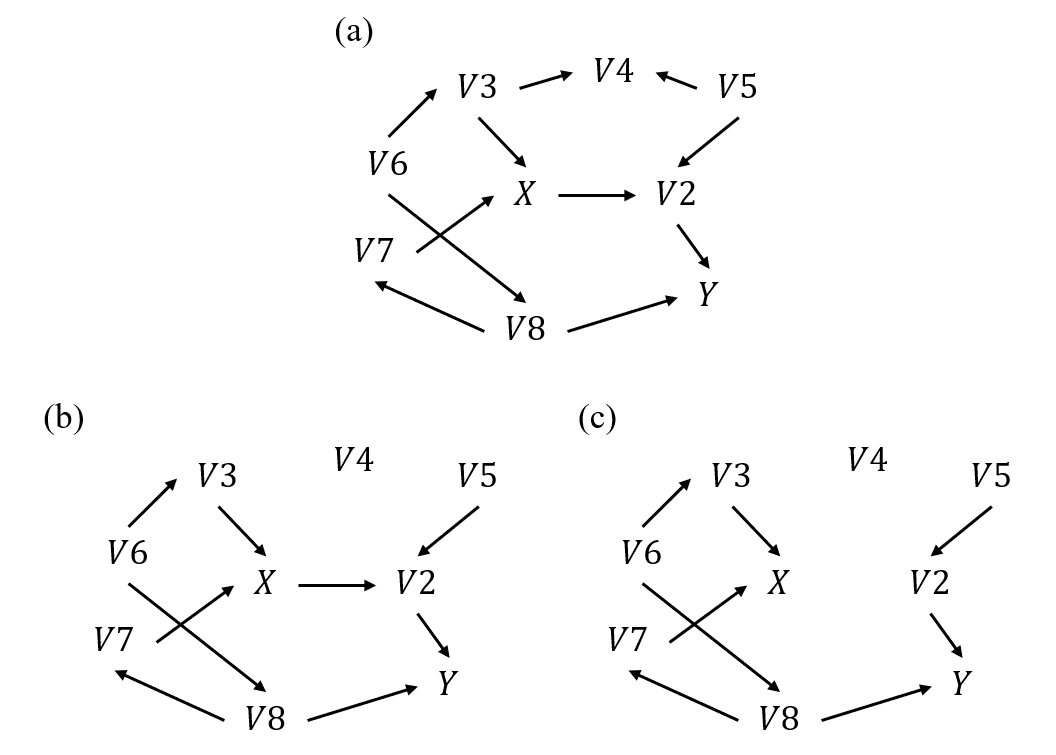}
\caption{An example explaining each step of CAVS algorithm. (a) shows the DAG, (b) shows the graph after Step 1-1 and (c) shows the graph after Step 1-2.}
\label{fig:alg-step-example}
\end{figure}

\subsection{Extension of CAVS to CPDAG}
We assumed DAGs in Algorithm \ref{alg1}, but since it is possible that only CPDAGs can be estimated from observed data, we consider here the case where our method is extended to CPDAGs. 
In this section, we present and prove a theorem that enables the computation of interventions based on CPDAGs, only determining the direction of some edges.
As previously stated, we can use the back-door criterion in the selection of adjustment variables for a DAG. 
However, the adjustment criterion \cite{shpitser2010} is more comprehensive than the back-door criterion defined as follows.

\begin{dfn}[Adjustment Criterion (AC), Shpitser et al., 2010]
In a DAG $G$, a set of variables $Z$ satisfies the adjustment criterion for $\{X,Y\}$ if 
\begin{enumerate}
    \item No element in $Z$ is a descendant in $G$ of any $W \in V \setminus X$ that lies on a causal path from $X$ to $Y$.
    \item All non-causal paths in $G$ from $X$ to $Y$ are blocked by $Z$.
\end{enumerate}
\label{ac}
\end{dfn}
In extending this criterion to CPDAG, we introduce the following concept called amenability \cite{perkovic2015}.

\begin{dfn}[Amenability for DAGs and CPDAGs, Perkovi\'{c} et al., 2015]
A DAG or CPDAG is said to be adjustment amenable, relative to $\{X,Y\}$ if every possibly directed path from $X$ to $Y$ in $G$ starts with a directed edge out of $X$.
\label{ame}
\end{dfn}
According to this concept, the adjustment criteria for DAGs are generalized to include CPDAGs as well, which is defined as the generalized adjustment criterion \cite{perkovic2015} below.

\begin{dfn}[Generalized Adjustment Criterion (GAC), Perkovi\'{c} et al., 2015]
In a DAG or CPDAG $G$, a set of variables $Z$ satisfies the generalized adjustment criterion for $\{X,Y\}$ if 
\begin{enumerate}
    \item $G$ is adjustment amenable relative to $\{X,Y\}$.
    \item No element in $Z$ is a possible descendant in $G$ of any $W \in V \setminus X$ that lies on a causal path from $X$ to $Y$.
    \item All definite status non-causal paths in $G$ from $X$ to $Y$ are blocked by $Z$.
\end{enumerate}
\label{gac}
\end{dfn}
A node has a definite status on a path if it is determined to be a collider or non-collider on that path, and a path has a definite status if all nodes on that path except those at both ends have a definite status.
We now introduce the following theorem \cite{perkovic2015} for DAG and CPDAG.
\begin{theorem}[Perkovi\'{c} et al., 2015]
$Z$ is an adjustment set relative to $\{X,Y\}$ in a DAG or CPDAG $G$ if and only if $Z$ satisfies the generalized adjustment criterion relative to $\{X,Y\}$ in $G$.
\label{theorem}
\end{theorem}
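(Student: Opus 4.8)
The plan is to reduce Theorem~\ref{theorem} to the adjustment criterion for DAGs (Definition~\ref{ac}) by exploiting the fact that a CPDAG $G$ represents an entire Markov equivalence class of DAGs, which I write $[G]$. Since the data-generating DAG is not identified within this class, the operative notion is that a set $Z$ is an \emph{adjustment set} for the CPDAG $G$ precisely when the adjustment formula (\ref{interv}) returns the correct interventional distribution in \emph{every} DAG $\mathcal{D}\in[G]$. Taking this as the working definition, I would prove the theorem through the chain: $Z$ is an adjustment set for $G$ $\iff$ $Z$ is an adjustment set in each $\mathcal{D}\in[G]$ $\iff$ $Z$ satisfies AC in each $\mathcal{D}\in[G]$ $\iff$ $Z$ satisfies GAC in $G$. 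The first link is the definition and the second is Shpitser et al.'s DAG characterization applied member-by-member, so the entire burden falls on the final link.

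First I would dispatch the pure DAG case, the degenerate instance $[G]=\{G\}$, by checking that GAC collapses to AC. In a fully oriented graph, ``possibly directed path'' becomes ``directed path'' and ``possible descendant'' becomes ``descendant''; every node attains definite status on every path, so ``definite status non-causal path'' becomes ``non-causal path''; and amenability (GAC condition~1) holds vacuously, since any directed path leaving $X$ already begins with an edge out of $X$. After this collapse, GAC conditions~2 and~3 read verbatim as the two clauses of AC, so the DAG half of the theorem is exactly the cited result and may be taken as given.

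For the substantive step I would establish GAC$(G)\iff$ ``AC$(\mathcal{D})$ for all $\mathcal{D}\in[G]$'' by matching the three GAC conditions against the two AC clauses ranging over the class. The argument rests on structural dictionary lemmas linking CPDAG notions to their member-DAG counterparts: (i) a node is a \emph{possible descendant} of $W$ in $G$ iff it is a descendant of $W$ in some $\mathcal{D}\in[G]$; (ii) a node has \emph{definite status} on a path exactly when its collider/non-collider status is invariant across all consistent orientations, so a definite status non-causal path is blocked by $Z$ in $G$ iff the corresponding path is blocked in every member in which it occurs; and (iii) amenability holds iff the partition of $X$--$Y$ paths into causal and non-causal ones is stable across $[G]$. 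Granting these, the forward direction ($\Rightarrow$) is routine: GAC condition~2 keeps $Z$ clear of every mediator-descendant in each member, while conditions~1 and~3 force every non-causal path of each member to be blocked, so AC holds throughout. The reverse direction ($\Leftarrow$) I would prove by contraposition, witnessing each failed GAC condition by an explicit member DAG in which AC breaks: a failure of amenability by orienting the offending undirected edge at $X$ so as to open a back-door path; a failure of condition~2 by an orientation making the forbidden node a genuine descendant of a mediator; and an unblocked definite status non-causal path by a consistent extension in which that path is open and non-causal. Each construction uses the standard fact that any orientation of the undirected edges of $G$ introducing neither a new v-structure nor a directed cycle completes to a DAG in $[G]$ (Meek's rules).

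The main obstacle I anticipate is the rigorous proof of the dictionary lemmas (i)--(iii) together with the ``witness DAG'' constructions of the $(\Leftarrow)$ direction. The delicate point is that a local orientation chosen to expose one AC violation must be completable to a full member of $[G]$ \emph{without} side effects that either destroy the path being exploited or silently repair the violation through forced orientations elsewhere. Showing that definite status features are exactly the invariants of the equivalence class, and that every admissible local choice admits such a global completion, is the technical heart of the argument; it is here that Meek's orientation rules and the skeleton-plus-v-structure characterization of Markov equivalence must be deployed with care.
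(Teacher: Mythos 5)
Your proposal follows essentially the same route as the paper's treatment: the paper does not prove Theorem~\ref{theorem} itself but defers to Perkovi\'{c} et al., whose argument is exactly your chain --- take ``adjustment set in a CPDAG'' to mean validity of the adjustment formula in every DAG of the Markov equivalence class, apply Shpitser et al.'s AC characterization member-by-member, and then establish that GAC in $G$ is equivalent to AC in every member, which is precisely the content of Lemmas~\ref{lem1} and~\ref{lem2} as stated in the paper. Your ``dictionary lemmas'' and witness-DAG constructions correspond to the technical core of those two lemmas in the original reference, so the proposal is a correct outline of the same proof.
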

Additionally we show two complementary problems \cite{perkovic2015} below related to the proof of Theorem \ref{theorem} and the new Theorem \ref{theo2}, which will be presented later.
\begin{lem}[Perkovi\'{c} et al., 2015]
\label{lem1}
Let condition 1 of the GAC be satisfied relative to $\{X,Y\}$ in a CPDAG $G$. Then the following two statements are equivalent:
\begin{enumerate}
    \item $Z$ satisfies condition 2 of the GAC relative to $\{X,Y\}$ in $G$.
    \item $Z$ satisfies condition 1 of the AC relative to $\{X,Y\}$ in every DAG in the Markov equivalence class of $G$.
\end{enumerate}
\end{lem}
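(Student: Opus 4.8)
The plan is to prove the two implications separately, exploiting the correspondence between directed structure in the individual DAGs and \emph{possibly} directed structure in the CPDAG $G$. Recall that a DAG $D$ lies in the Markov equivalence class of $G$ exactly when $D$ has the same skeleton and the same v-structures as $G$ and retains every directed edge of $G$; in particular every directed edge of $D$ corresponds in $G$ either to an identically oriented edge or to an undirected edge, never to an oppositely oriented one. The single fact I would use repeatedly is therefore: if $A \to \cdots \to B$ is a directed path in such a $D$, then the same vertex sequence is a possibly causal (possibly directed) path from $A$ to $B$ in $G$, since no edge along it can be oriented the other way in $G$. In slogan form, ``descendant in some member DAG'' implies ``possible descendant in $G$,'' and conversely a possible descendant is witnessed by a directed path in at least one member DAG.

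For the direction (i) $\Rightarrow$ (ii) I would argue by contraposition using only the easy half of the above. Suppose AC condition 1 fails in some DAG $D$ of the equivalence class: there is $z \in Z$ and a node $W \in V \setminus X$ on a directed path $X \to \cdots \to Y$ in $D$ with $z \in \mathrm{des}_D(W)$. Projecting both the $X$--$Y$ path and the $W$--$z$ directed path into $G$ via the fact above, $W$ lies on a possibly causal path from $X$ to $Y$ in $G$ and $z$ is a possible descendant of $W$ in $G$, which is exactly a violation of GAC condition 2. The contrapositive yields (i) $\Rightarrow$ (ii); notice that amenability (GAC condition 1), though standing as a hypothesis, is not essential for this direction.

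The substantive direction is (ii) $\Rightarrow$ (i), again by contraposition. Assume GAC condition 2 fails in $G$: there exist $z \in Z$, a node $W \neq X$ on a possibly causal path $p$ from $X$ to $Y$, and a possibly directed path $q$ from $W$ to $z$ witnessing that $z$ is a possible descendant of $W$. I must produce a \emph{single} DAG $D$ in the equivalence class in which $z$ is a genuine descendant of a node on a genuine causal path from $X$ to $Y$, thereby violating AC condition 1 in $D$. The construction I have in mind first replaces $p$ and $q$ by minimal (shortest, hence unshielded) possibly causal paths, and then orients $G$ into a consistent DAG extension in which the edges of $p$ are all oriented ``forward'' from $X$ toward $Y$ and the edges of $q$ forward from $W$ toward $z$. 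In that $D$, the path $p$ is a causal path from $X$ to $Y$ through $W \neq X$, while $q$ exhibits $z$ as a descendant of $W$, so AC condition 1 is violated as required.

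The hard part is justifying the orientation step, i.e.\ showing that such a consistent DAG extension actually exists. Two issues must be controlled. First, orienting the undirected edges of $p$ and of $q$ in the prescribed directions must neither create a directed cycle nor conflict with the directed edges already fixed in $G$, and the partial orientation must be completable to a full member DAG; this rests on an orientation lemma for CPDAGs guaranteeing that a possibly causal path can be realized as a directed path in some member DAG, together with Meek's completion rules to extend the partial orientation. Second, because $p$ and $q$ share the node $W$ (and may meet elsewhere), I must verify that the two forward orientations are mutually consistent---no edge is forced both ways and no cycle is introduced at a shared vertex. Controlling this simultaneity, using the unshieldedness of the reduced paths and amenability (which pins the first edge of $p$ as $X \to \cdots$ so that $X$ acts as a source), is where essentially all of the graph-theoretic work lies; once the joint orientation lemma is secured, the remainder is the bookkeeping sketched above.
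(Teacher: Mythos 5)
First, a point of reference: the paper does not prove this lemma at all --- it is quoted from Perkovi\'{c} et al.\ (2015), and the text explicitly defers the proofs of Lemmas~1 and~2 to that original paper. So your attempt must be judged against the proof in the cited source, whose architecture your sketch does broadly mirror: the direction (i)~$\Rightarrow$~(ii) by projection (a directed path in a member DAG $D$ is a possibly directed path in the CPDAG $G$, so descendants in $D$ are possible descendants in $G$) is correct and is essentially the argument in the literature, and you are right that amenability is not needed there. One caveat even here: your argument tacitly uses Perkovi\'{c} et al.'s original formulation of GAC condition~2, in which the forbidden set is built from \emph{proper possibly causal} paths; the present paper's restatement says ``causal path,'' and under that literal reading the projected path (which may contain undirected edges in $G$) would not witness a violation, so you should state explicitly which definition is in force.

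The genuine gap is in the direction (ii)~$\Rightarrow$~(i), and you have in effect named it yourself without closing it. The entire content of the hard direction is the existence of a \emph{single} DAG in the Markov equivalence class of $G$ in which the possibly causal path $p$ from $X$ through $W$ to $Y$ and the possibly directed path $q$ from $W$ to $z$ are \emph{simultaneously} directed; deferring this to an unproven ``joint orientation lemma'' reduces the proof to a restatement of the goal. The difficulty is real: realizing each path separately is standard (shortest-subsequence/unshielded reduction plus the Meek-rule-1 induction, which orients an unshielded possibly directed path forward once its first edge is oriented), but the two forward orientations can conflict on shared undirected edges or within a common chain component, and orienting edges toward $z$ can destroy the causality of the $X$--$Y$ path in the very DAG you construct. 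Note also that realizing only an $X$-to-$z$ possibly causal path (e.g., by concatenating $p(X,W)$ with $q$ and extracting an unshielded path) is \emph{not} sufficient: $z$ being a descendant of $X$ alone does not violate AC condition~1; you need $z$ to descend from a node lying on a genuine causal path from $X$ to $Y$ in the same DAG. Perkovi\'{c} et al.\ overcome exactly this with dedicated lemmas exploiting the chordality of the chain components of $G$ (orienting each component from a chosen source via a perfect elimination ordering) together with amenability to keep the first edge of $p$ pointing out of $X$; even your preliminary step ``shortest, hence unshielded'' is itself a lemma requiring proof rather than an observation. Until the simultaneous-realization lemma is proved, the argument is an accurate roadmap of the published proof but not a proof.
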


\begin{lem}[Perkovi\'{c} et al., 2015]
\label{lem2}
Let condition 1 of the GAC be satisfied relative to $\{X,Y\}$ in a CPDAG $G$, and let $Z$ satisfy condition 2 of the GAC relative to $\{X,Y\}$ in $G$. Then the following two statements are equivalent:
\begin{enumerate}
    \item $Z$ satisfies condition 3 of the GAC relative to $\{X,Y\}$ in $G$.
    \item $Z$ satisfies condition 2 of the AC relative to $\{X,Y\}$ in every DAG in the Markov equivalence class of $G$.
\end{enumerate}
\end{lem}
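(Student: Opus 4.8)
The plan is to establish the equivalence by proving the two implications separately, each by contraposition, and to bridge the CPDAG $G$ with its member DAGs through the structural facts that every DAG in the Markov equivalence class $[G]$ shares the skeleton and the $v$-structures of $G$, and that the undirected (chain) components of a CPDAG are chordal. Throughout, let $\mathcal{D}$ denote a generic DAG in $[G]$. The core object is the correspondence between a non-causal path from $X$ to $Y$ that is open given $Z$ in some $\mathcal{D}$ and a definite status non-causal path from $X$ to $Y$ that is open given $Z$ in $G$. The standing hypotheses serve precisely to keep the ``causal''/``non-causal'' labelling coherent across $[G]$: amenability (Definition~\ref{ame}) guarantees that every possibly causal path from $X$ to $Y$ leaves $X$ through a directed edge, and condition 2 of the GAC for $Z$ is, by Lemma~\ref{lem1}, the same as condition 1 of the AC (Definition~\ref{ac}) holding in every $\mathcal{D}$, so no element of $Z$ is a descendant of an interior node of a causal path in any member DAG.

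For the implication ``statement 2 $\Rightarrow$ statement 1'' I would argue the contrapositive. Suppose $G$ contains a definite status non-causal path $p$ from $X$ to $Y$ that is open given $Z$. Since $p$ has definite status, its collider/non-collider pattern is fixed throughout $[G]$, and the orientations making each non-collider of $p$ a non-collider are compatible with the defining constraints of $[G]$ (shared skeleton and $v$-structures, acyclicity, chordal chain components); hence there is a DAG $\mathcal{D}\in[G]$ realizing them, and the directed paths witnessing that each collider of $p$ has a descendant in $Z$ may be oriented consistently as well. In this $\mathcal{D}$ the path $p$ has the same colliders, non-colliders, and relation to $Z$, so it is open given $Z$; moreover, by amenability a definite status non-causal path of $G$ is not causal in any member DAG, hence $p$ is non-causal in $\mathcal{D}$. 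Thus $\mathcal{D}$ carries an open non-causal path and condition 2 of the AC fails there, contradicting statement 2.

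For the reverse implication ``statement 1 $\Rightarrow$ statement 2'' I would again use the contrapositive, which is the substantive direction. Suppose some $\mathcal{D}\in[G]$ has a non-causal path $q$ from $X$ to $Y$ open given $Z$, and take such a path of minimal length over all member DAGs; let $p$ be the same node sequence read in $G$. First I would show $p$ is definite status: if a triple $\langle A,B,C\rangle$ on $p$ lacked definite status, then, because every unshielded triple of a CPDAG is already oriented, $A$ and $C$ would be adjacent in $G$, and routing through the $A$--$C$ edge, together with chordality of the chain components, yields a strictly shorter open non-causal path in some member DAG, contradicting minimality. Being definite status, $p$ is open given $Z$ in $G$ exactly because $q$ is open given $Z$ in $\mathcal{D}$. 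The remaining and most delicate point is to certify that $p$ is non-causal \emph{as a definite status path of $G$}, not merely non-causal in the particular $\mathcal{D}$: if $p$ were possibly causal, amenability would force it to start $X\to$, and its being non-causal in $\mathcal{D}$ could arise only from an undirected edge of $G$ oriented against the path, creating an interior collider whose openness would demand a descendant in $Z$, which is exactly what condition 2 of the GAC, read through Lemma~\ref{lem1}, forbids. Ruling this out leaves $p$ definitely non-causal, so $p$ witnesses the failure of condition 3 of the GAC, i.e.\ of statement 1.

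The main obstacle is this last step of the substantive direction: promoting a path that is merely non-causal in one member DAG to a \emph{definite status} path that is non-causal throughout $[G]$, while simultaneously preserving openness given $Z$ across each chordality shortcut, which requires re-checking, at every collider introduced or removed, that it still has the required descendant in $Z$. Coordinating these three requirements (definite status, openness, and class-wide non-causality) under the twin hypotheses of amenability and condition 2 of the GAC is where the real combinatorial work lies; the easy direction and the Markov-equivalence invariance of the collider structure are then essentially bookkeeping.
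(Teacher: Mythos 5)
Note first that the paper does not actually prove Lemma~\ref{lem2}: it imports the statement from Perkovi\'{c} et al.\ and explicitly defers the proof to the original reference \cite{perkovic2015}, so the comparison target is that original argument. Your architecture matches it. The easy direction (statement~2 $\Rightarrow$ statement~1, by contraposition) is fine and in fact simpler than you make it: on a definite status path the collider/non-collider pattern and all directed edges are shared by \emph{every} DAG in the equivalence class, so an open definite status non-causal path in $G$ is open and non-causal in each member DAG directly; no bespoke DAG needs to be constructed. Your treatment of the hard direction's final step is also essentially right, though slightly misrouted: when a possibly causal path of $G$ is oriented against itself in some DAG $\mathcal{D}$, the resulting collider lies on a proper possibly causal path in $G$, and its opening descendant in $Z$ (a directed path in $\mathcal{D}$) is a \emph{possible} descendant in $G$, so condition~2 of the GAC is violated \emph{directly} --- invoking Lemma~\ref{lem1} here is awkward, since condition~1 of the AC in a given DAG only constrains descendants of nodes on paths that are causal \emph{in that DAG}, which your collider's path need not be.

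The genuine gap is in the middle of the hard direction. The claim ``being definite status, $p$ is open given $Z$ in $G$ exactly because $q$ is open given $Z$ in $\mathcal{D}$'' is false as stated: openness of a collider $C$ on a definite status path in the CPDAG requires a directed path in $G$ from $C$ into $Z$, whereas the witnessing directed path in $\mathcal{D}$ may use edges that are undirected in $G$, so $C$ need not be an ancestor of $Z$ in $G$. Closing this requires a nontrivial transfer result --- if some DAG in the equivalence class contains a path from $X$ to $Y$ that is $d$-connecting given $Z$, then $G$ contains a \emph{definite status} path from $X$ to $Y$ that is $d$-connecting given $Z$ --- which is exactly the auxiliary machinery Perkovi\'{c} et al.\ rely on (in the spirit of the generalized back-door analysis of \cite{maathuis2015generalized}), proved by an induction in which each shortcut through an adjacent pair is accompanied by re-routing along descendant paths and re-verifying collider openness. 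Your chordality shortcut likewise needs a proof that it preserves openness, preserves non-causality, and strictly decreases length in some member DAG; you correctly identify this as ``where the real combinatorial work lies,'' but you leave it unexecuted, so the proposal is the right plan with its central combinatorial lemma asserted rather than proven.
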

The proofs of Lemmas \ref{lem1}, \ref{lem2} and the proof of Theorem \ref{theorem} using them are provided in the original paper \cite{perkovic2015}.
Theorem \ref{theorem} shows that the satisfaction of GAC is a necessary and sufficient condition for adjustment variables in CPDAGs, but there is no mention of the case where GAC is not satisfied.
Therefore, we present a new theorem on the conditions for satisfying GAC and what can be done when GAC is not satisfied.
\begin{theorem}
In CPDAG, the intervention effect from the intervention variable to the outcome variable is computable if all edges adjacent to the intervention variable are directed edges.
\label{theo2}
\end{theorem}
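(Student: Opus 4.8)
The plan is to prove computability by exhibiting a set $Z$ that satisfies the generalized adjustment criterion (Definition \ref{gac}); once such a $Z$ is found, Theorem \ref{theorem} guarantees that it is a valid adjustment set, so the intervention effect can be evaluated through the adjustment formula (\ref{interv}) from purely observational quantities, which is exactly what ``computable'' means. The hypothesis that every edge incident to $X$ is directed will be used first to secure condition~1 of the GAC, i.e.\ amenability, and the remaining work is then to construct a $Z$ meeting conditions~2 and~3.

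First I would establish amenability relative to $\{X,Y\}$ (Definition \ref{ame}). Let $\pi$ be any possibly directed path from $X$ to $Y$, and let $V_1$ be the node adjacent to $X$ on $\pi$. By hypothesis the edge between $X$ and $V_1$ is directed, so it is either $X \to V_1$ or $X \leftarrow V_1$. Because $\pi$ is possibly directed, no edge on it may point back toward $X$, which rules out $X \leftarrow V_1$; hence the first edge is $X \to V_1$, a directed edge out of $X$. As $\pi$ was arbitrary, every possibly directed path from $X$ to $Y$ starts with a directed edge out of $X$, so $G$ is adjustment amenable and condition~1 of the GAC holds.

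Next I would produce a concrete $Z$ satisfying conditions~2 and~3. The natural candidate is the canonical adjustment set, consisting of the possible ancestors of $X$ and $Y$ that are excluded from the forbidden set (the possible descendants of the mediators lying on causal paths from $X$ to $Y$). Condition~2 then holds essentially by construction, since no element of this set is a possible descendant of such a mediator. For condition~3 I would invoke Lemmas \ref{lem1} and \ref{lem2}: under amenability, satisfying GAC conditions~2 and~3 is equivalent to satisfying the two conditions of the adjustment criterion (Definition \ref{ac}) in \emph{every} DAG of the Markov equivalence class of $G$. Because the edges at $X$ are directed (hence compelled), the parent set $pa(X)$ is common to all members of the class; I would use this to reduce the blocking claim for every definite status non-causal path to the ordinary back-door argument carried out separately in each member DAG, and then conclude via Theorem \ref{theorem} that $P(Y \mid do(X))$ is given by (\ref{interv}).

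The main obstacle is this last step, because amenability supplies only condition~1: the substance of the proof is the existence of a single $Z$ that simultaneously satisfies conditions~2 and~3 across the whole equivalence class. Concretely, one must show that the forbidden set never removes a node that is indispensable for blocking some definite status non-causal path, so that the canonical set is both admissible (condition~2) and blocking (condition~3). Handling the ``possible descendant'' and ``definite status'' qualifiers \emph{uniformly} over all DAGs in the equivalence class, rather than in a single fixed DAG, is where the care is required, and Lemmas \ref{lem1} and \ref{lem2} are precisely the tools that license the reduction to the per-DAG adjustment criterion.
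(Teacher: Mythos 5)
Your first step (amenability) is correct and is essentially the paper's own argument, stated more carefully than the paper states it: since every edge at $X$ is directed and a possibly directed path cannot begin with an edge into $X$, every possibly directed path from $X$ to $Y$ starts with $X \to V_1$, so condition 1 of the GAC (Definition \ref{gac}) holds. Your continuation is also the paper's route: invoke Lemmas \ref{lem1} and \ref{lem2} to conclude that, under amenability, a set satisfies GAC conditions 2 and 3 in the CPDAG exactly when it satisfies the AC (Definition \ref{ac}) in every DAG of the Markov equivalence class, so the adjustment formula returns the same value in all of them and the effect can be computed from any single representative via Theorem \ref{theorem}. (Note that the paper never constructs a concrete $Z$ at all; it stops at this commonality argument, and it additionally discusses a second case --- undirected edges at $X$ that are subsequently oriented --- which is not needed for the conditional statement you are proving.)

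The genuine gap is the step you yourself flag but do not close: you never establish that a set satisfying GAC conditions 2 and 3 actually exists. Your proposed candidate, the canonical set of possible ancestors of $X$ and $Y$ minus the forbidden set, requires for its verification precisely the completeness machinery of Perkovi\'{c} et al.; the claim that ``the forbidden set never removes an indispensable blocker'' is the hard content of that result, not something that follows from amenability plus routine checking. Indeed, amenability alone does not guarantee that any adjustment set exists: if $Y$ is a parent of $X$, then all edges at $X$ can be directed and the graph is (vacuously) amenable, yet the length-one non-causal path $X \leftarrow Y$ cannot be blocked by any $Z$, and the effect is computable only because it trivially equals $P(Y)$, outside the adjustment framework entirely. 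The clean way to close your argument within the paper's toolkit is the observation you make in passing but do not deploy: because all edges at $X$ are directed in the CPDAG, they are compelled, so $pa(X)$ is identical in every DAG of the equivalence class; $pa(X)$ satisfies the back-door criterion, hence the AC, in each such DAG (excluding the degenerate case above), and Lemmas \ref{lem1} and \ref{lem2} then lift $pa(X)$ to a GAC set for the CPDAG, after which Theorem \ref{theorem} finishes the proof. As written, your proposal correctly isolates the existence question as the substance of the theorem --- more sharply, in fact, than the paper's own proof does --- but leaves it unproven.
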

\begin{proof}
Theorem \ref{theorem} states that only variables that satisfy GAC are adjustment variables in CPDAG.
Therefore, we consider whether variables satisfy GAC by examining two cases, where $X$ is the intervention variable and $Y$ is the outcome variable: 
(1) all edges adjacent to $X$ are directed edges, and  (2) not all edges adjacent to $X$ are directed edges. 

We first examine the case where all edges adjacent to $X$ are directed edges. 
Regarding the adjustment amenable condition, which is one of the conditions of GAC (condition 1), according to Definition \ref{ame}, the CPDAG is adjustment amenable if every possibly  directed path from $X$ to $Y$ starts with a directed edge out of $X$.
Therefore, if $(X, i)_{i \in adj(X)}$ are all directed edges, then its CPDAG is adjustment amenable with respect to $\{X,Y\}$.
Also, according to Lemma \ref{lem1}, if condition 1 of GAC (Definition \ref{gac}) is satisfied for $\{X,Y\}$ in its CPDAG, then the variables satisfying condition 2 of GAC satisfies condition 1 of AC (Definition \ref{ac}) in all Markov-equivalent DAGs.
Also, according to Lemma \ref{lem2}, if conditions 1 and 2 of GAC are satisfied for $\{X,Y\}$ in its CPDAG, then the variables satisfying condition 3 of GAC satisfies condition 2 of AC in all Markov-equivalent DAGs.
That is, if a CPDAG is adjustment amenable with respect to $\{X,Y\}$, then the adjustment variable is common in all Markov-equivalent DAGs.
The fact that the adjustment variables are common across these DAGs means that the results of the intervention calculations using that variables for adjustment are equal. This ensures that the intervention effect calculated from any one of these DAGs is guaranteed to represent the intervention effect in the CPDAG, making the intervention effect in the CPDAG computable.  Therefore, for this case, Theorem 2 is shown to be valid.

Next, we consider the case where not all edges adjacent to $X$ are directed edges. In this case, 
its CPDAG may not be adjustment amenable with respect to $\{X,Y\}$.
Specifically, as in Definition \ref{ame}, the CPDAG is not adjustment amenable if at least one of the possible directed path from $X$ to $Y$ starts with an undirected edge adjacent to $X$, or is adjustment amenable if there are undirected edges adjacent to $X$ but the undirected edges are not related to the possible directed paths from $X$ to $Y$.
If we can determine the direction of at least the edges adjacent to X among all the undirected edges in the CPDAG, then the CPDAG becomes adjustment amenable with respect to $\{X,Y\}$, so that the Markov-equivalent DAGs are limited and there are variables that satisfy the GAC. 
Then, as mentioned before, Lemmas \ref{lem1} and \ref{lem2} make the adjustment variable common in all limited Markov-equivalent DAGs.
As in the previous case, the intervention effect in the CPDAG is computable using any one of these DAGs, and Theorem 2 is shown to be valid for this case as well.
Since Cases (1) and (2) are mutually exclusive, and Theorem 2 is shown to be valid in each case, it follows that Theorem 2 is always valid.
\end{proof}

Figure \ref{fig:4} (a) shows an example of CPDAG. In this case, there is no variable that satisfies GAC because it is not adjustment amenable with respect to $\{X,Y\}$.
Decomposing this CPDAG into Markov-equivalent DAGs, where DAGs (b) and (c) have \{Z1\} or \{Z2\}, DAG (d) has an empty set as the smallest set of variables satisfying AC.
However, this graph becomes adjustment amenable with respect to $\{X,Y\}$ if we determine the direction of the edges between $X-Z1$ among the undirected edges in (a).
If the direction of the edge is $(X,Z1)$, the DAG is restricted to the DAG in (d), and if the direction of the edge is $(Z1,X)$, it is restricted to the DAG in (b) or (c). In both cases, the minimal set of variables satisfying AC can be uniquely determined. Therefore, Step 1 of Algorithm \ref{alg1} should be executed using the DAG in (d) in the former case, and either the DAG in (b) or (c) in the latter case, and Step 2 can be executed as is because it does not depend on the graph structure.
If the intervention variable is not $X$ but $V$ in Fig. \ref{fig:4}, then this CPDAG is adjustment amenable with respect to $\{V,Y\}$, so there is an adjustment variable that satisfies GAC.
In fact, $\{Z2\}$ is the smallest set of variables satisfying AC in any of the DAGs in (b), (c), and (d).
Therefore, even for CPDAGs where no adjustment variables exist, an adjustment variable that satisfies GAC (or the back-door criterion) can exist by determining the direction of edges adjacent to $X$. Then, the optimal adjustment variable can be found by the proposed algorithm.

Using Theorem \ref{theo2}, the procedure for applying the proposed algorithm CAVS to CPDAG is as follows.
If the edges adjacent to the intervention variable are all directed edges, CAVS can be used for any of the Markov-equivalent DAGs. If there are undirected edges adjacent to the intervention variable, the Markov-equivalent DAGs are limited if the direction of those edges can be determined, and we can use CAVS on any of those DAGs.
In other words, for CPDAGs that do not have adjustment variables, the adjustment variables that satisfy the GAC (or the back-door criterion) can be obtained by determining the direction of the edges at least adjacent to $X$, and the optimal adjustment variable can be found by CAVS.
This is practically useful considering the reality that CPDAGs are estimated from the observed data, and it broadens the possibility of intervention calculations in real-world applications.

\begin{figure}[t]
\centering
\includegraphics[scale=0.5]{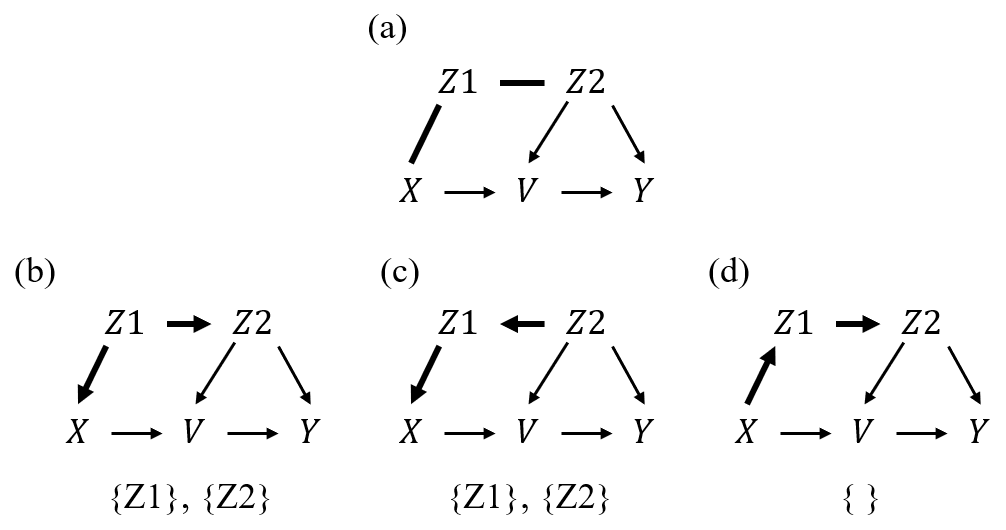}
\caption{(a) An example of CPDAG. (b) (c) (d) Markov-equivalent DAGs, but with different adjustment variables for each graph. (b) and (c) have the same direction of edges adjacent to $X$ and the same adjustment variables $\{Z1, Z2\}$. Otherwise, the direction of the edges adjacent to $X$ in (d) is different from (b) and (c), and the adjustment variable is empty in (d). In other words, the CPDAG in (a) can be divided into \{(b), (c)\} and \{(d)\}, depending on the direction of the edges adjacent to X, and the adjustment variables can be determined.}
\label{fig:4}
\end{figure}

\section{Related Work}
For covariate adjustment, if a DAG is known, we generally focus on whether the adjustment variable satisfies the back-door criterion or the adjustment criterion. However, satisfying the back-door criterion is not always sufficient, and it is known that adjustment by the instrument variable can lead to Z-bias \cite{ding2017instrumental, hernan2024causal}. For example, $pa(X)$ always satisfies the back-door criterion, but if there is an instrument variable in it, it is better to remove it from the adjustment variables. In this paper, this bias can be removed by selecting the smallest set of variables that satisfy the back-door criterion. On the other hand, while practical variable selection methods when the DAG is unknown have been discussed \cite{vanderweele2011new, vanderweele2019principles}, we do not compare them with our study because the assumptions regarding the existence of the DAG are different. In calculating intervention effects, some approaches \cite{rosenbaum1983central} estimate propensity scores and then remove bias through stratification, matching, inverse weighting, and other techniques. However, even in such approaches, the selection of explanatory variables is important when estimating propensity scores. For example, simulations have shown that selecting variables that have a strong relationship with the intervention variable has a negative impact on the calculation of the intervention effect \cite{brookhart2006variable}. Another study compared the results of intervention effect in the case where \textit{all variables are selected}, \textit{variables are selected based on prior knowledge}, and \textit{variables are selected based on the strength of association with the intervention and outcome variables}, etc \cite{patrick2011implications}. These results indicate that it is better not to use variables that are strongly associated with the intervention variable. In this paper, we have summarized how such negative effects occur, and incorporated the findings of these existing studies into the algorithm in order to avoid such scenarios.

In addition, another study showed that adjustment of a variable that is influenced by the intervention variable can introduce bias \cite{rosenbaum1984consequences}, thus suggesting that only variables that are ahead of the intervention variable in time should be chosen. However, it should be noted that even if the variable is ahead of the intervention variable, Z-bias or M-bias \cite{cole2010illustrating, greenland2003quantifying, vanderweele2019principles, pearl2009causality} may occur. Both biases can be avoided by obtaining the DAG, and the structural approach, which constructs the DAG based on knowledge or estimated from data, has an advantage in this regard.

As for CPDAG, one study has proposed an extension of its back-door criterion called the generalized back-door criterion \cite{maathuis2015generalized}. However, although this study mentions that $pa(X)$ satisfies the generalized back-door criterion, it does not mention how to select variables when there are other sets of variables that satisfy this criterion. There is another method that efficiently calculates intervention effects in each Markov-equivalent DAG and then integrates the results \cite{maathuis2009estimating}. As an alternative, we propose an approach to distinguish between cases where the Generalized Adjustment Criterion (GAC) \cite{perkovic2015} is and is not satisfied, and to narrow down DAGs in the latter case by determining the direction of certain edges.

\section{Experiment}
We investigated the accuracy of CAVS using existing datasets and artificially generated data. On the existing datasets, we examine which variables CAVS selects as adjustment variables and how accurately it calculates interventions compared to using other adjustment variables, while specifically showing the positional relationship between the intervention variable, the outcome variable, and the adjustment variables on the DAG. For artificial data, we show that CAVS is superior to some baseline methods on various randomly generated graphs.

We examine the error by measuring the difference between the intervention results when the data are randomly sampled and when sufficient data are available as a reference value. Since there may be zero frequencies, we evaluate the gap between the reference outcome variable $Y$ and the estimated outcome variable $\hat{Y}$ by the cosine distance, as 

\begin{eqnarray}
 \label{error_cosd} 
 Error &=& {\it E_{D_k}} [{\it E_{X}} [Cos(Y, \hat{Y})]] \nonumber \\
 &=& \frac{1}{K} \sum\limits_{k=1}^{K} \frac{1}{J} \sum\limits_{j=1}^{J} 
 \left(1- \frac{\sum\limits_{i=1}^{I} P(Y_{i}^{j}) \cdot P(\hat{Y}_{i}^{j})}
 {\sqrt{\sum\limits_{i=1}^{I} {P(Y_{i}^{j})^2}} \cdot 
 \sqrt{\sum\limits_{i=1}^{I} P(\hat{Y}_{i}^{j})^2}} \right), \nonumber \\
 P(Y_{i}^{j}) &=& P(Y=y_i|do(X=x_j); \mathbf{D}), \nonumber \\
 P(\hat{Y}_{i}^{j}) &=& P(Y=y_i|do(X=x_j); {\mathbf D}_k), \nonumber
\end{eqnarray}

where $I$ is the number of categories in $Y$ and $i$ is their index, $J$ is the number of categories in $X$ and $j$ is their index, $K$ is the number of randomly sampled subsets, and ${\mathbf D}_k$ is the randomly sampled $k$-th subset data from the observed data ${\mathbf D}$.

\subsection{Existing dataset}
We applied CAVS to two existing datasets, Insurance \cite{binder1997adaptive} and Hailfinder \cite{abramson1996hailfinder}, and investigated its performance in detail. Each dataset has a DAG structure and conditional discrete probability distributions. For each of these two datasets, after specifically selecting intervention and outcome variables, we show which variables CAVS actually selected from multiple candidate adjustment variables. We also show where each of the multiple candidate adjustment variables is located on the graph, the mutual information between the adjustment variables and the intervention variable, and in which cases the accuracy degrades.

First, in the Insurance example, Fig. \ref{fig:insurance} shows the overall structure of the DAG, with the node \textit{OtherCarCost} as the outcome variable and \textit{Accident} one level above as the intervention variable. In this case, if we enumerate the candidate adjustment variables that satisfy the back-door criterion described in Step 1 of Algorithm \ref{alg1}, we obtain \{RuggedAuto\}, \{VehicleYear, MakeModel\}, \{SocioEcon, RiskAversion, Antilock\}, \{Age, RiskAversion, Antilock, RiskAversion, Antilock\}, \{RiskAversion, DrivingSkill, Antilock\}, and \{Antilock, DrivQuality\}. \{Mileage\} is a parent of the intervention variable, but since it does not contribute to closing the back-door path, it is not selected as a candidate adjustment variable. Next, we generated 200,000 data samples based on the conditional discrete probability distribution of the dataset, and calculated the mutual information utilizing Eq. (\ref{MI}) as a measure of the correlation between the intervention variable and the candidate adjustment variables. We calculated the intervention results for the outcome variable $Y$ based on the value calculated using all 200,000 samples and estimated the outcome variable $\hat{Y}$ using the subset data ${\mathbf D}_k$ obtained by randomly sampling with 2000 samples. We sampled ${\mathbf D}_k$ five times and calculated the error between $Y$ and $\hat{Y}$ using Eq. (\ref{error_cosd}). The results are listed in Table \ref{tab:table1}.

\begin{figure}[t]
 \centering
 \includegraphics[width=0.84\columnwidth]{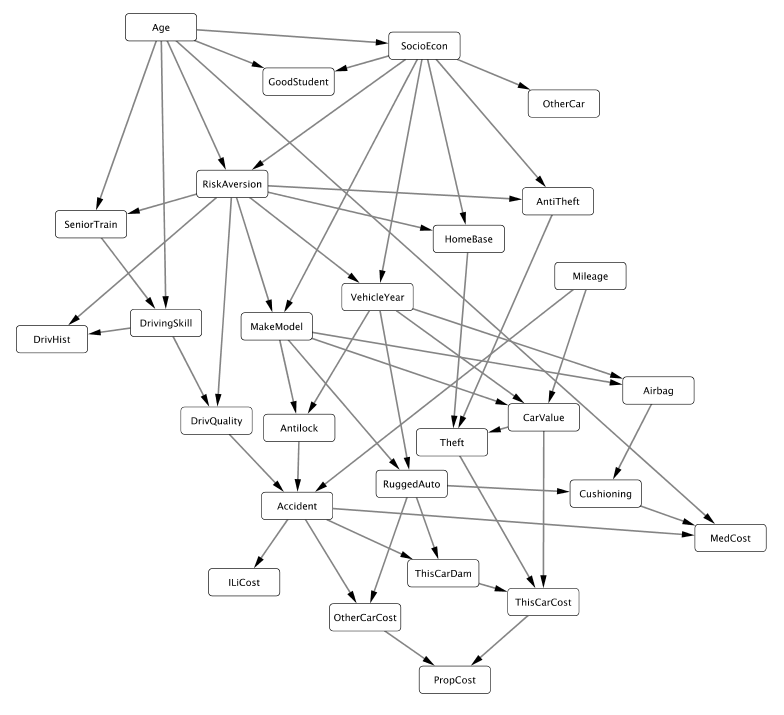}
 \caption{DAG structure of Insurance data. The intervention variable \textit{Accident} locates one level above of the outcome variable \textit{OtherCarCost}. And it can be seen that \textit{Accident} has three parent variables \{Mileage, Antilock, DrivQuality\}.}
 \label{fig:insurance}
\end{figure}

\begin{table*}[ht]
 \centering
 \resizebox{\textwidth}{!}{%
 \begin{tabular}{c|c|c}
 \multicolumn{1}{c|}{Adjustment variable candidates} & \multicolumn{1}{c|}{MI with intervention variable} & \multicolumn{1}{c}{Mean of cosine distance} \\
 \hline \hline
 \{RuggedAuto\} & 0.00016 & 0.00189 \\
 \{VehicleYear, MakeModel\} & 0.00354 & 0.00175 \\
 \{SocioEcon, RiskAversion, Antilock\} & 0.02178 & 0.00292 \\
 \{Age, RiskAversion, Antilock\} & 0.03186 & 0.00430 \\
 \{RiskAversion, DrivingSkill, Antilock\} & 0.26115 & 0.01358 \\
 \{Antilock, DrivQuality\} & 0.30299 & 0.02479 \\
 \multicolumn{1}{c}{ } \\
 \multicolumn{1}{c|}{Parents of intervention variable} & \multicolumn{1}{c|}{MI with intervention variable} & \multicolumn{1}{c}{Mean of cosine distance} \\
 \hline \hline
 \{Mileage, Antilock, DrivQuality\} & 0.40876 & 0.03262 \\
 \end{tabular}%
 }
 \caption{Mutual Information (MI) with the intervention variable and estimation error for each adjustment variable candidate in the Insurance data. This shows that when the MI with the intervention variable is smaller, the estimation error tends to be smaller too. And the parent set of the intervention variable \{Mileage, Antilock, DrivQuality\} has the largest amount of MI with the intervention variable and the largest estimation error.}
 \label{tab:table1}
\end{table*}

CAVS chooses \{RuggedAuto\} because it minimizes the mutual information with the intervention variables. As shown in Fig. \ref{fig:insurance}, \{RuggedAuto\} is located three hops from the intervention variable and \{VehicleYear, MakeModel\} is located two hops from the intervention variable. The results show that when the adjustment variable is further away from the intervention variable on the DAG, the amount of mutual information with the intervention variable is smaller and the estimation error is also smaller. Both \{Antilock, DrivQuality\} are parents of the intervention variable, and the mutual information with the intervention variable is the largest compared to the other candidate adjustment variables, and the estimation error is also the largest. This indicates that utilizing mutual information as one of the correlation indices can be an effective criterion for selecting adjustment variables. In addition, the parent set of the intervention variable \{Mileage, Antilock, DrivQuality\} that includes \{Mileage\} has the largest amount of mutual information with the intervention variable and the largest estimation error for the intervention calculation among all other sets of minimal variables that satisfy the back-door criteria. In particular, compared to \{Antilock, DrivQuality\}, the addition of the \{Mileage\} variable decreases the accuracy, indicating that it is not a good decision to simply use $pa(X)$ for adjustment variables for no other reason than that it is certain to satisfy the back-door criteria.

Next, in the Hailfinder example, Fig. \ref{fig:hailfinder} shows the overall structure of the DAG.
Let node \textit{R5Fcst} be the outcome variable, and \textit{N34StarFcst} be the intervention variable. We do not list all the candidate adjustment variables here, but there are 14 candidates in total, including the parent set of the intervention variable \{ScenRel3\_4, PlainsFcst\} and the last node in the back-door path from the intervention variable to the outcome variable \{MountainFcst\}. As in the Insurance example, the reference value of the outcome variable $Y$ was calculated using 200,000 samples, and the outcome variable $\hat{Y}$ was estimated using the subset data ${\mathbf D}_k$ with samples 2000, 1000, 500, 250, and 125. The error between $Y$ and $\hat{Y}$ was calculated using Eq. (\ref{error_cosd}).

\begin{figure}[t]
 \centering
 \includegraphics[width=1.0\columnwidth]{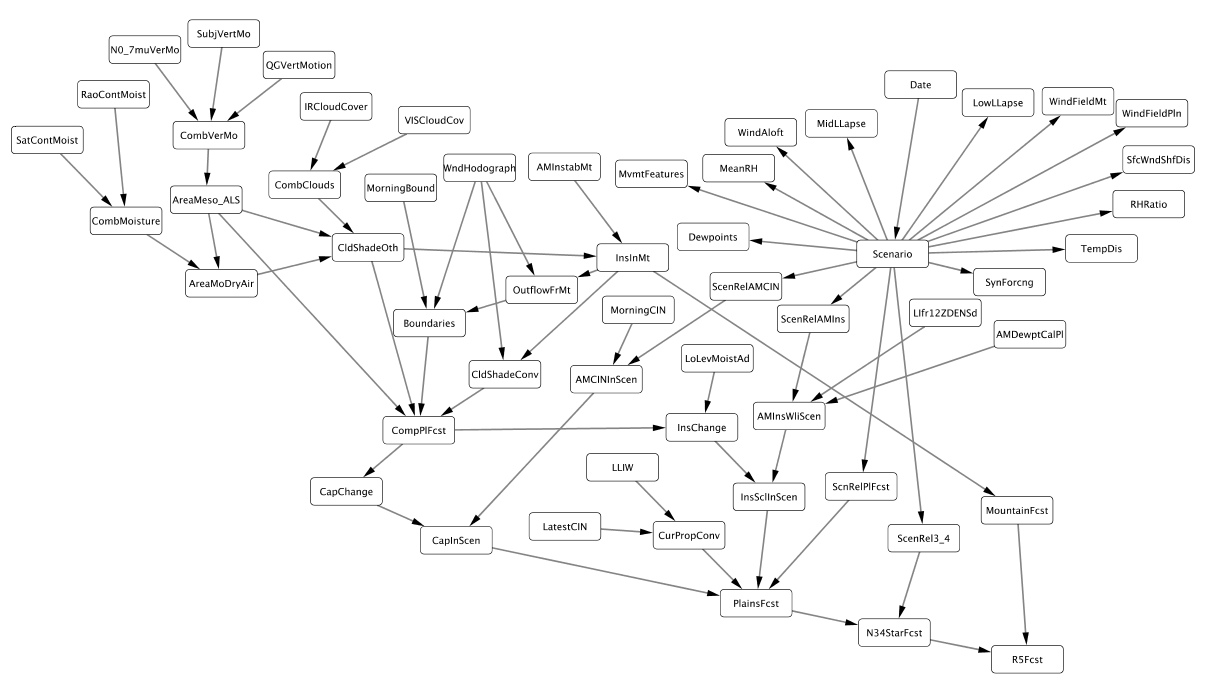}
 \caption{DAG structure of Hailfinder data. The intervention variable \textit{N34StarFcst} locates one level above of the outcome variable \textit{R5Fcst}. There are the parent set of the intervention variable \{ScenRel3\_4, PlainsFcst\} and the last node in the back-door path from the intervention variable to the outcome variable is \{MountainFcst\}.}
 \label{fig:hailfinder}
\end{figure}

\begin{figure}[t]
 \centering
 \includegraphics[width=0.8\columnwidth]{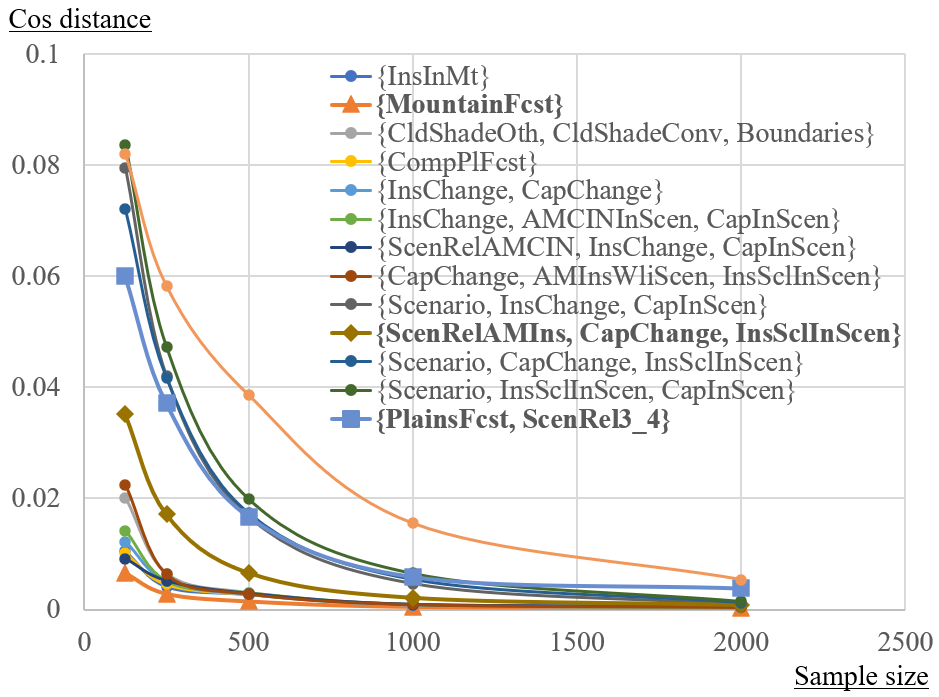}
 \caption{Relationship between sample size and error for each adjustment variable candidate in the Hailfinder data. There is an inversely proportional relationship between sample size and error. Particularly, the error increases sharply when the sample size is small. When \{MountainFcst\} is used as the adjustment variable, the error is the smallest of the whole, regardless of the sample size.}
 \label{fig:hailfinder_graph}
\end{figure}

The relationship between the number of samples and the estimation error for each candidate adjustment variable is shown in Fig. \ref{fig:hailfinder_graph}. It can be seen that the error increases significantly when the sample size decreases by half. The mutual information with the intervention variables was $7.6 \times 10^{-6}$ for \{MountainFcst\}, $0.061$ for \{Scenario, CapChange, InsSclInScen\}, and $0.68$ for \{ScenRel3\_4, PlainsFcst\}. In particular, when a variable having a small mutual information with $X$, such as \{MountainFcst\}, is used for adjustment, the error is not high even when the number of samples is small. This indicates that the adjustment variable having a small mutual information with X is robust to sample size.

\subsection{Artificial Data}
Next, we show the performance of CAVS on artificially generated data. The comparisons for CAVS are the smallest parents of $X$ that satisfy the back-door criterion, and $pa(X)$ (that certainly satisfy the back-door criterion). The data were generated as follows. DAGs were randomly generated with the number of nodes fixed at 30 and the number of edges fixed at 40. All variables were 4-valued categorical variables. For the conditional discrete probability distributions, the distributions among all variables were generated as independent uniform ones. We utilized the intervention results calculated using the 10,000 samples of data generated according to the conditional discrete probability distribution as the reference value for the outcome variable $Y$, and calculated the error with the outcome variable $\hat{Y}$ estimated using the subset data ${\mathbf D}_k$ obtained by randomly sampling 500 samples (2\% of the total). Here, instead of calculating the mean of the subset data ${\mathbf D}_k$ in Eq. (\ref{error_cosd}), we looked at the error for each subset data ${\mathbf D}_k$. We prepared four patterns of DAGs, three patterns of conditional discrete probability distributions for each DAG, and five patterns of random sampling, for a total of 60 ($4 \times 3 \times 5$) ways to calculate the error.

The results are shown in Fig. \ref{fig:artificial_experiment_result}. From left to right, a box-and-whisker plot of the errors for 60 samples is shown for when variables are selected by CAVS, when the smallest set of variables among $pa(X)$ that satisfy the back-door criterion are used, and when $pa(X)$ are used as adjustment variables. The mean of each error is $0.0077$, $0.0110$, and $0.0232$, respectively, indicating that CAVS reduces the error by about 30\% for the smallest set of variables in $pa(X)$ that satisfy the back-door criterion and by about 67\% for $pa(X)$. In addition, it is clear from Fig. \ref{fig:artificial_experiment_result} that CAVS has a smaller error variance than the two baselines, and that it provides good and stable results.

\begin{figure}[ht]
\centering
\includegraphics[width=10cm]
{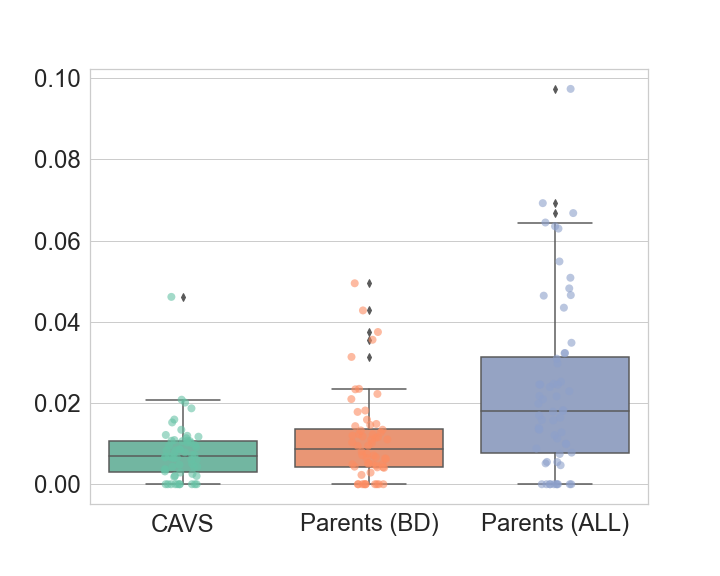}
\caption{Box plots of errors for CAVS and two baselines (the smallest parents of $X$ that satisfy the back-door criterion, parents of $X$) on artificial data. Of these, CAVS has the smallest error and the smallest error variance, indicating that it is the best and most stable result.}
\label{fig:artificial_experiment_result}
\end{figure}

\section{Conclusion}
We have proposed a method for selecting variables for covariate adjustment from both graph structure and data perspectives to maintain accuracy in causal effect estimation.
When there is not enough data, it is important to prioritize the selection of variables for this purpose.
To this end, we developed a two-step algorithm that consists of exploring candidates and selecting better variables using the correlations between related variables, and demonstrated its effectiveness in experiments with both existing and artificial data.
In particular, we showed that our method is robust to sample size and prevents accuracy degradation even with small amounts of data.
In addition, we demonstrated how to extend the application of the proposed method from DAGs to CPDAGs, and proved the theorem on computation possibility of interventions in CPDAGs. 
This broadens the applicability of intervention calculations in real-world data.

\section*{References}
\bibliography{iopart-num}

\end{document}